\documentclass{article}

\usepackage{microtype}
\usepackage{graphicx}
\usepackage{subcaption}
\usepackage{booktabs} 

\usepackage{hyperref}

\usepackage[accepted]{icml2026}

\usepackage{amsmath}
\usepackage{amssymb}
\usepackage{mathtools}
\usepackage{amsthm}

\usepackage{multirow}
\usepackage{listings}
\usepackage{xcolor}
\usepackage{tikz}
\usepackage{tikz-cd}
\usetikzlibrary{arrows.meta,positioning,shapes.geometric,fit,backgrounds,calc}
\usepackage{enumitem}
\usepackage{float}
\usepackage{xurl}
\usepackage{booktabs}

\usepackage[capitalize,noabbrev]{cleveref}

\theoremstyle{plain}
\newtheorem{theorem}{Theorem}[section]

\newtheorem{corollary}[theorem]{Corollary}
\theoremstyle{definition}
\newtheorem{definition}[theorem]{Definition}

\theoremstyle{remark}

\usepackage[disable,textsize=tiny]{todonotes}

\newcommand{\MI}{\mathrm{MI}}

\lstdefinestyle{dafny}{
    basicstyle=\footnotesize\ttfamily,
    keywordstyle=\bfseries\color{blue!70!black},
    commentstyle=\itshape\color{gray},
    frame=single,
    framerule=0.4pt,
    xleftmargin=0.5em,
    breaklines=true,
    breakatwhitespace=true,
    columns=fullflexible,
    keepspaces=true,
    morekeywords={method,returns,modifies,invariant,while,var,ensures,predicate,ghost,function,datatype,lemma,requires,exists,forall,in,match,case,if,then,else,assert,assume,refines,module,import,opened,include,seq,set,map,nat,int,bool,string,true,false,null},
        literate={<==>}{{\textcolor{blue!70!black}{\textbf{<==>}}}}4
            {==>}{{\textcolor{blue!70!black}{\textbf{==>}}}}3
            {&&}{{\textcolor{blue!70!black}{\textbf{\&\&}}}}2
            {||}{{\textcolor{blue!70!black}{\textbf{||}}}}2,
    }

\icmltitlerunning{Containment Verification}

\begin{document}

\twocolumn[
  \icmltitle{Containment Verification: AI Safety Guarantees \\ Independent of Alignment}



  \icmlsetsymbol{equal}{*}

  \begin{icmlauthorlist}
    \icmlauthor{Royce Moon}{enclave}
    \icmlauthor{Lav R.\ Varshney}{stonybrook}
  \end{icmlauthorlist}

  \icmlaffiliation{enclave}{Enclave Intelligence}
  \icmlaffiliation{stonybrook}{AI Innovation Institute, Stony Brook University}

  \icmlcorrespondingauthor{Royce Moon}{moonr@umich.edu}

  \icmlkeywords{AI safety, AI security, agentic safety, agentic security, agentic systems, long-horizon safety, formal verification, Dafny, large language models}

  \vskip 0.3in
]



\printAffiliationsAndNotice{}  

\begin{abstract}
    Agentic frameworks are the software layer through which AI agents act in the world. Existing safety methods intervene on the model and therefore remain conditional on unverifiable properties of learned behavior. We introduce \textbf{containment verification}, which locates safety guarantees in the agentic framework itself. Under \emph{havoc oracle semantics}, the AI is modeled as an unconstrained oracle over the framework's typed action space, and the verified containment layer must enforce the boundary policy for every typed action value the AI can emit. For \emph{boundary-enforceable} properties, expressed over modeled boundary events, action arguments, and state, we prove a universal guarantee by forward-simulation refinement and mechanize it in Dafny. We instantiate the paradigm by verifying PocketFlow, a minimalist agentic LLM framework, and use an agentic synthesis pipeline to generate the specification, operational model, and refinement proof under an information barrier against tautological specifications. To our knowledge, this is the first deductive formal verification of an agentic framework. The guarantee is independent of alignment because it quantifies over the framework's typed action boundary rather than over model behavior.
\end{abstract}

\section{Introduction}
\label{sec:intro}

AI agents are being deployed to take increasingly consequential actions on real systems, directly affecting the external world. Existing safety approaches for these agents are uniformly conditional on properties of the AI itself, including alignment training~\cite{casper2023rlhf,bai2022constitutional}, mechanistic interpretability~\cite{bricken2023monosemanticity}, faithful chain-of-thought, and adversarial defense protocols~\cite{greenblatt2024control}. However, none of these properties is formally verifiable and empirical evidence shows their underlying assumptions breaking in practice. In recent work, sleeper agents were shown to survive standard safety training~\cite{hubinger2024sleeper}, chain-of-thought was shown to be unfaithful~\cite{lanham2023cot,turpin2023cot}, and adversarial defense protocols reduced attack success but did not completely eliminate it~\cite{ctrlz2025}. As model capability grows, the gap between what an agent can do and what we can say about its safety widens. 

Engineering safety in high-consequence domains has historically relied on fail safes whose guarantees do not depend on the controlled system functioning correctly. In nuclear engineering, a nuclear reactor's control rods drop via gravity when active control is lost. The same construction is realized in process engineering by pressure relief valves that discharge at threshold regardless of root cause, and in computer architecture by hardware memory protection that enforces isolation between processes whatever the running process attempts. We propose \emph{containment verification} as a fail safe for AI agents: rather than attempting to verify properties of the AI itself, we deductively verify the agentic frameworks that mediate between the AI and external state. Production deployments build on agentic frameworks such as LangChain~\cite{langchain}, the OpenAI Agents SDK~\cite{openaiagentssdk}, and the more recent OpenClaw~\cite{openclaw} and Hermes Agent~\cite{hermesagent}. Our instantiation of \emph{containment verification} targets PocketFlow~\cite{pocketflow}, a minimalist LLM framework that exposes the underlying state machine common to agentic frameworks as an explicit graph abstraction over which arbitrary agentic workflows can be composed. As agentic frameworks are conventional software, their behavior can be specified, formally verified, and shown to enforce boundary-event safety for every typed action the AI can emit, independent of the AI's capability, training procedure, or strategy. Henceforth, we refer to agentic frameworks as \emph{containment layers} in containment verification.

The verification regime of containment verification is \emph{havoc oracle semantics}, a standard primitive in deductive verification languages~\cite{barnett2005boogie,leino2008boogie2}. Under havoc semantics, the AI is modeled as an oracle, a procedure with no body whose return value the verifier admits as ranging over the entire action type. The verification obligation reduces to proving that the containment layer enforces the safety properties for every value the AI could possibly emit. The proof is structured as a forward-simulation refinement between an abstract state machine specification of boundary safety and the concrete operational state machine that implements it. 

Constructing the abstract specification, the operational model, and the refinement relation for a containment layer has historically required manual proof engineering at a scale that has constrained formal verification's reach in practice, the engineering bottleneck Dalrymple et al.~\cite{dalrymple2024gsai} identify as a missing piece for guaranteed safe AI.  We address it with an agentic pipeline that synthesizes these formal artifacts with large language models in Dafny, under an architectural information barrier that rules out tautological specifications by construction. The pipeline is itself built on PocketFlow and takes the containment layer's source as input to emit a machine-checked Dafny artifact.

The class of safety properties for which this works is \emph{boundary-enforceable}, meaning predicates over the typed action, modeled boundary event, and system state, with no reliance on the natural-language semantics of the AI's reasoning. The motivating high-consequence failures of agentic systems are boundary failures. Unauthorized network egress, model weight or context exfiltration, irreversible financial actions, destructive filesystem operations, database modification, and tampering with monitoring or training infrastructure all require the agent to cross an effect boundary. Containment verification targets that boundary directly. Under effect exclusivity, every such effect passes through the verified typed action interface, so the AI cannot realize these failures through the containment layer regardless of its capability or strategy. Content-level properties such as truthfulness or non-deception fall outside this scope. The resulting guarantee is universal over all typed action sequences at the modeled boundary and is realized in Dafny~\cite{li2025dafny}.

A separate line of recent work applies formal verification to AI safety, but verifies artifacts adjacent to the AI rather than the containment layer itself. The verification targets range from generated policy code in VeriGuard~\cite{veriguard} to runtime domain-specific language (DSL) monitors in AgentSpec~\cite{agentspec} and information flow controlled interpreters in CaMeL~\cite{camel}. Containment verification, in contrast, verifies the containment layer directly under havoc oracle semantics. Interface discipline efforts such as the Model Context Protocol~\cite{mcp} structure the action surface but provide no formal guarantee. As model capabilities scale, the gap between interface discipline and verified safety widens.

A second line of recent work scales beyond fixed-test benchmarking by computing rigorous probabilistic bounds on how often an LLM violates safety properties under specified input distributions. BEAVER gives deterministic sound bounds on outputs satisfying prefix-closed semantic constraints such as output-format correctness, absence of leaked personally identifiable information (PII), and vulnerability-free code generation~\cite{beaver2025}. C\textsuperscript{3}LLM gives Clopper-Pearson confidence intervals on multi-turn conversational risk, surfacing failure rates that single-turn benchmarks miss~\cite{c3llm2025}. Lumos provides a probabilistic-programming DSL with formal semantics that orchestrates statistical certifiers over prompt distributions, including vision-language model safety in autonomous driving scenarios~\cite{lumos2026}. These methods target the LLM's output distribution whereas containment verification targets the containment layer that mediates between the AI and external state.

\subsection*{Contributions}
\begin{enumerate}
    \item \textbf{The \emph{containment verification} paradigm.} We formalize a fail safe paradigm for AI agents in which boundary-enforceable safety properties are guaranteed by forward-simulation refinement between an abstract specification of boundary safety and the concrete containment layer that implements it, with the AI admitted as an unconstrained oracle under havoc semantics.
    \item \textbf{A soundness theorem.} We prove that, under boundary-event refinement and effect exclusivity, every modeled external effect emitted by the containment layer satisfies the verified policy for every typed action sequence at the modeled boundary (Theorem~\ref{thm:cvs}).
    \item \textbf{A formally verified, deployed PocketFlow instantiation.} We instantiate the theorem with a boundary-event policy for PocketFlow and prove the induced state invariants over the recorded trace in Dafny. To our knowledge, this is the first deductive verification of an agentic framework.
    \item \textbf{An agentic formal synthesis pipeline.} We automate formal specification construction, refinement, and verification via a seven-phase agentic pipeline that is itself built on PocketFlow. Safety property derivation operates under an architectural information barrier that rules out tautological specifications by construction. Resolution, vacuity, and discrimination gates further reject vacuous or non-discriminating specifications, and iterative proof repair drives Dafny to a fixed point.
\end{enumerate}

\section{Containment Verification}
\label{sec:paradigm}
Safety guarantees admit a strength ordering by the set of adversarial scenarios each covers, summarized in Table~\ref{tab:approaches}. The two weaker levels, empirical and distributional, certify respectively against finitely-tested inputs and against stipulated input distributions, with the distributional level approached by alignment training methods such as RLHF~\cite{casper2023rlhf} or Constitutional AI~\cite{bai2022constitutional} when their assumptions hold. Universal guarantees, the strongest level, certify no failures over every value of the action type and are reached by deductive verification under havoc oracle semantics. Letting $\mathcal{S}(\gamma)$ denote the set of adversarial scenarios under which guarantee $\gamma$ makes a claim,
\[
    \mathcal{S}(\text{Empirical}) \subseteq
    \mathcal{S}(\text{Distributional}) \subseteq
    \mathcal{S}(\text{Universal}),
\]
with both inclusions typically strict. Approaches further divide by \emph{locus}, the component on which a safety mechanism acts, into oracle-level methods that act on the AI itself and container-level methods that act in the containment layer that mediates between the AI and external state. No oracle-level approach reaches universal guarantees as each is conditional on a property of the AI's learned behavior.

\begin{table}[H]
    \begin{center}
        \begin{small}
            \begin{tabular}{@{}lll@{}}
                \toprule
                \textbf{Locus} & \textbf{Method} & \textbf{Guarantee} \\
                \midrule
                Oracle    & Red teaming, adversarial evaluation    & Empirical \\
                Oracle    & CoT monitoring, interpretability       & Empirical \\
                Oracle    & AI control protocols                   & Empirical \\
                Oracle    & RLHF, Constitutional AI                & Distributional \\
                Oracle    & Activation steering, circuit breakers  & Distributional \\
                Container & Sandboxing (process, capability)       & Empirical \\
                Container & \textbf{containment verification}      & \textbf{Universal} \\
                \bottomrule
            \end{tabular}
        \end{small}
    \end{center}
    \caption{AI safety approaches by locus and maximum achievable guarantee strength. Oracle-level approaches constrain the AI's learned behavior and their upper bound is distributional. Container-level approaches act in the containment layer that mediates between the AI and external state. Only containment verification has its conditional formally verified.}
    \label{tab:approaches}
\end{table}

The empirical refutations cited in Section~\ref{sec:intro} substantiate the oracle-level rows of the table. Sleeper agents survive standard safety training by distinguishing training from deployment contexts, an ability that does not weaken with capability. Chain-of-thought monitoring presupposes a faithfulness condition between externalized reasoning and internal computation that no current method enforces well. AI control protocols, designed under an explicitly adversarial threat model, bound attack success rates without eliminating them. In other words, no oracle-level approach produces a guarantee of the form ``the system cannot $X$, regardless of how the model tries or its capabilities.''

The two container-level rows of Table~\ref{tab:approaches} act at the same locus but on different layers of the agent deployment stack (Figure~\ref{fig:stack}). Production sandboxes such as Bubblewrap, Firecracker, Docker, or Podman confine the agent process via Linux namespaces, seccomp filters, hypervisor isolation, and capability tokens. Sandboxes operate on the host kernel or hypervisor level and are operationally audited rather than formally verified. The recently disclosed Copy Fail vulnerability, a Linux kernel privilege escalation latent for nearly a decade, empirically illustrates the consequence~\cite{copyfail2026}.

\begin{figure}[t]
\centering
\begin{tikzpicture}[
    every node/.style={font=\footnotesize},
    layer/.style={
        draw, rounded corners=2pt,
        text width=0.85\columnwidth,
        align=center, inner sep=4pt
    },
    ai/.style={layer, fill=orange!12, draw=orange!60!black},
    verified/.style={layer, fill=blue!10, draw=blue!55!black,
        line width=0.9pt},
    empirical/.style={layer, fill=gray!8, draw=gray!55!black},
    flow/.style={-{Stealth[length=4pt, width=4pt]}, thick, gray!60!black},
    node distance=0.55em
]
\node[ai] (ai) {AI model \\
    \textcolor{gray!50!black}{\scriptsize\itshape Modeled as havoc oracle}};
\node[verified, below=1.6em of ai] (cl) {Containment layer \\
    \textcolor{blue!55!black}{\scriptsize\itshape Agentic frameworks (formally verified in containment verification)}};
\node[empirical, below=of cl] (sb) {Sandbox \\
    \textcolor{gray!50!black}{\scriptsize\itshape Bubblewrap, Firecracker, Docker, Podman}};
\node[empirical, below=of sb] (k) {Host kernel or hypervisor \\
    \textcolor{gray!50!black}{\scriptsize\itshape Linux, KVM}};
\node[empirical, below=of k] (hw) {Hardware \\
    \textcolor{gray!50!black}{\scriptsize\itshape Spectre, Meltdown, and other hardware-class vulnerabilities}};
\draw[flow] (ai) -- node[right, midway, font=\scriptsize\itshape,
    text=gray!55!black] {~typed \texttt{Action}} (cl);
\draw[flow] (cl) -- (sb);
\draw[flow] (sb) -- (k);
\draw[flow] (k) -- (hw);
\end{tikzpicture}
\caption{Deployment stack of an AI agent. \texttt{Action} is the typed interface through which the AI's outputs reach the containment layer. The containment layer is formally verified whereas the sandbox, host kernel, and hardware beneath it are operationally audited.}
\label{fig:stack}
\end{figure}

For threats expressed through the verified typed action interface, containment verification is stronger than operational sandboxing as it gives a universal, machine-checked guarantee that no out-of-policy modeled boundary event is emitted for any AI action sequence. Sandboxing confines the process beneath the agent framework and remains complementary, covering lower-level runtime, kernel, and hardware behavior outside the formal model. In a high assurance deployment, the two should be layered where containment verification prevents forbidden effects at the agent boundary, while sandboxing limits damage from bugs or bypasses below that boundary.

Containment verification assumes \emph{effect exclusivity} where every externally visible effect available to the agent is mediated by the containment layer's typed action interface and represented by a modeled boundary event. The theorem is universal over this boundary. Effects outside the interface, including bugs in the Python runtime, operating system, sandbox, or hardware, are outside the formal guarantee and must be handled by complementary isolation mechanisms.

We model the AI in Dafny as a method with an empty body (Listing~\ref{lst:havoc}). The verifier admits any value of the declared return type \texttt{Action} as the model's output. \texttt{Action} is the typed enum enumerating the external effects the containment layer permits the agent to perform which is a property of the containment layer's public interface rather than of the AI. Verification quantifies over every value of \texttt{Action}, a set fixed by the type signature alone and independent of the model's training, capability, or alignment.

\begin{lstlisting}[style=dafny,
    caption={The havoc oracle in Dafny. The external method supplies
        an arbitrary typed \texttt{Action} to the verified containment layer. The oracle itself is not proved safe. The proof shows that, for every possible returned action, the containment layer emits only policy-compliant boundary events.},
    label={lst:havoc}]
method {:extern} GetActionFromLLM(c: Constants, v: Variables)
    returns (action: Action)
    requires v.WF(c)
\end{lstlisting}

The safety object is therefore not the model call but the transition from typed actions to boundary events.

The guarantee is conditional on a refinement structure rather than on a property of the AI's learned behavior or dynamics. A forward-simulation refinement connects an abstract specification of boundary safety to the concrete operational state machine that implements it. An executable Dafny implementation then refines the operational state machine via per-method contracts that Dafny discharges. Both refinements are machine-checked, and the Dafny compiler produces Python from verified source~\cite{li2025dafny}. Since every step in this chain is either machine-proven or a property of deterministic toolchain code, capability invariance follows immediately from havoc's quantification over every value of the action type. Section~\ref{sec:theorem} formalizes the forward simulation and proves the soundness theorem. Specification quality is audited empirically through the three-gate validation of Section~\ref{sec:instantiation}.

\section{Soundness Theorem}
\label{sec:theorem}

We now formalize containment verification as a trace-refinement theorem over boundary events. The theorem is independent of the AI's internal dynamics as the AI is represented only by the typed action value it emits at each step. The guarantee is therefore universal over all oracle strategies expressible through the action type, conditional on a refinement proof and an explicit runtime assumption connecting modeled boundary events to deployed external effects.

\subsection{Labeled transition systems}
\label{sec:ts}

A \emph{labeled transition system} is a tuple
\[
\mathcal{T}=(\mathcal{S},\mathcal{A},\mathcal{E},s^0,\to)
\]
where $\mathcal{S}$ is a state space, $\mathcal{A}$ is the typed action space emitted by the AI, $\mathcal{E}$ is a space of boundary events, $s^0$ is the initial state, and
\[
s \xrightarrow[a]{e} s'
\]
denotes that action $a$ in state $s$ may produce boundary event $e$ and next state $s'$. The transition relation may be nondeterministic. We assume \emph{totality} which entails for every state $s$ and action $a$, there exists at least one pair $(e,s')$ such that $s\xrightarrow[a]{e}s'$. Deterministic step functions are the special case where this pair is unique.

A trace of $\mathcal{T}$ is a sequence $(s_i,a_i,e_i)_{i\ge0}$ such that $s_0=s^0$ and $s_i\xrightarrow[a_i]{e_i}s_{i+1}$ for every $i$. Under \emph{havoc oracle semantics}, the verifier ranges over arbitrary action sequences $a_i\in\mathcal{A}$. The havoc trace set is
\[
\mathrm{Tr}^H(\mathcal{T}) =
\{(s_i,a_i,e_i)_{i\ge0} :
s_0=s^0,\ a_i\in\mathcal{A},\
s_i\xrightarrow[a_i]{e_i}s_{i+1}\ \forall i\}.
\]

The containment layer induces a concrete transition system $\mathcal{T}_I=(\mathcal{S}_I,\mathcal{A},\mathcal{E}_I,s^0_I,\to_I)$. A concrete oracle is any function from finite nonempty concrete histories to actions in $\mathcal{A}$. Stochastic oracles are handled by fixing a realization. When $\to_I$ is nondeterministic, a concrete oracle determines a set of compatible traces rather than a unique trace. Since havoc admits every action sequence, every compatible trace generated by any concrete oracle is covered by the havoc quantification.

\subsection{Boundary-event refinement}
\label{sec:faithfulness}

We use a labeled form of forward-simulation refinement in which each concrete boundary event is matched by a related abstract boundary event. Boundary-enforceable safety is expressed as a transition predicate
\[
\Pi(s,a,e,s')
\]
over the pre-state, AI-issued action, emitted boundary event, and post-state. State invariants are the special case in which $\Pi$ ignores $a$ and $e$. Rejected actions are represented by no-effect events, so the theorem constrains what the containment layer emits and not merely what it records.

\begin{definition}[Boundary-event refinement]
\label{def:boundary-refinement}
Let
\[
\mathcal{T}_M=(\mathcal{S}_M,\mathcal{A},\mathcal{E}_M,s^0_M,\to_M)
\quad\text{and}\quad
\mathcal{T}_I=(\mathcal{S}_I,\mathcal{A},\mathcal{E}_I,s^0_I,\to_I)
\]
share action space $\mathcal{A}$. The implementation $\mathcal{T}_I$ refines the abstract model $\mathcal{T}_M$ with respect to $(\Pi_M,\Pi_I)$ iff there exist relations $\mathcal{R}\subseteq\mathcal{S}_M\times\mathcal{S}_I$ and $\mathcal{Q}\subseteq\mathcal{E}_M\times\mathcal{E}_I$ such that:
\begin{description}[style=nextline,leftmargin=2em]
\item[\normalfont(R1) Initial.]
$(s^0_M,s^0_I)\in\mathcal{R}$.

\item[\normalfont(R2) Step simulation.]
For every $(s_M,s_I)\in\mathcal{R}$ and concrete step $s_I\xrightarrow[a]{e_I}_I s'_I$, there exist $s'_M$ and $e_M$ such that
\[
s_M\xrightarrow[a]{e_M}_M s'_M,\qquad
(s'_M,s'_I)\in\mathcal{R},\qquad
(e_M,e_I)\in\mathcal{Q}.
\]

\item[\normalfont(R3) Safety preservation.]
For every matched pair of steps satisfying (R2),
\[
\Pi_M(s_M,a,e_M,s'_M)\Rightarrow
\Pi_I(s_I,a,e_I,s'_I).
\]
\end{description}
\end{definition}

\begin{figure}[h]
\centering
\begin{tikzcd}[column sep=3.5em, row sep=2.6em]
s_M \arrow[r, "{a/e_M}"]
    \arrow[d, no head, "\mathcal{R}"']
& s'_M \arrow[d, no head, "\mathcal{R}"] \\
s_I \arrow[r, "{a/e_I}"']
& s'_I
\end{tikzcd}
\vspace{-0.5em}
\[
(e_M,e_I)\in\mathcal{Q}
\qquad
\Pi_M(s_M,a,e_M,s'_M)\Rightarrow
\Pi_I(s_I,a,e_I,s'_I)
\]
\caption{Boundary-event refinement. Each concrete step labeled by action $a$ and event $e_I$ is matched by an abstract step with the same action, a related event $e_M$, and related post-state.}
\label{fig:simulation}
\end{figure}

\subsection{Soundness theorem}
\label{sec:thm-statement}

\begin{theorem}[Containment Verification Soundness]
\label{thm:cvs}
Let $\mathcal{T}_M$ and $\mathcal{T}_I$ be total labeled transition systems sharing action space $\mathcal{A}$. Assume:
\begin{description}[style=nextline,leftmargin=2em]
\item[\normalfont(A1) Abstract havoc safety.]
For every havoc trace $(s^M_i,a_i,e^M_i)_{i\ge0}\in
\mathrm{Tr}^H(\mathcal{T}_M)$,
\[
\Pi_M(s^M_i,a_i,e^M_i,s^M_{i+1})
\]
holds for every $i\ge0$.

\item[\normalfont(A2) Boundary-event refinement.]
$\mathcal{T}_I$ refines $\mathcal{T}_M$ with respect to $(\Pi_M,\Pi_I)$ in the sense of Definition~\ref{def:boundary-refinement}.
\end{description}
Then every trace of $\mathcal{T}_I$ compatible with any concrete oracle satisfies
\[
\Pi_I(s^I_i,a_i,e^I_i,s^I_{i+1})
\]
for every step $i\ge0$.
\end{theorem}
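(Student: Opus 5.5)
We plan to prove the theorem by constructing, for any concrete trace, a matching abstract havoc trace step by step via (R2), and then transferring safety through (R3). Fix a concrete oracle and a compatible trace $(s^I_i,a_i,e^I_i)_{i\ge0}$ of $\mathcal{T}_I$. Since every action lies in $\mathcal{A}$, this trace belongs to $\mathrm{Tr}^H(\mathcal{T}_I)$, so the oracle plays no further role; it suffices to prove the conclusion for every havoc trace of $\mathcal{T}_I$.

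Next we build, by induction on $i$, abstract states $s^M_i$ and events $e^M_i$ with three properties: $s^M_0=s^0_M$; $(s^M_i,s^I_i)\in\mathcal{R}$ for all $i$; and for each $i$, $s^M_i\xrightarrow[a_i]{e^M_i}_M s^M_{i+1}$ with $(e^M_i,e^I_i)\in\mathcal{Q}$ and this pair of steps matched in the sense of (R2). The base case is (R1). For the inductive step, we have $(s^M_i,s^I_i)\in\mathcal{R}$ and the concrete step $s^I_i\xrightarrow[a_i]{e^I_i}_I s^I_{i+1}$. Applying (R2) yields $s'_M,e_M$, and we set $s^M_{i+1}=s'_M$ and $e^M_i=e_M$. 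For infinite traces, the choices are made by the axiom of dependent choice, or equivalently by a choice function on witnesses of (R2) applied recursively. The resulting sequence $(s^M_i,a_i,e^M_i)_{i\ge0}$ starts at $s^0_M$, uses the same actions $a_i\in\mathcal{A}$, and follows $\to_M$, so it is in $\mathrm{Tr}^H(\mathcal{T}_M)$. Totality is not needed for this construction, since (R2) itself supplies the abstract step; totality only guarantees that havoc traces exist.

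Finally, (A1) applied to this abstract trace gives $\Pi_M(s^M_i,a_i,e^M_i,s^M_{i+1})$ for every $i$. Each step $i$ is a matched pair satisfying (R2), so (R3) yields $\Pi_I(s^I_i,a_i,e^I_i,s^I_{i+1})$, which is the claim.

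The main subtlety is the bookkeeping in the inductive construction. The abstract trace must be built coherently as a single trace, so each step's matched pair must start from the previously chosen abstract state. Only then can (A1), which quantifies over whole traces, be invoked. Beyond that, the only delicate point is the appeal to dependent choice for infinite traces. If one prefers to avoid it, it suffices to prove the claim for each finite prefix: extend the abstract prefix to an infinite havoc trace using totality of $\mathcal{T}_M$, apply (A1), and restrict to the prefix. This is the one place where totality is actually used.
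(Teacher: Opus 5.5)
Your proposal is correct and matches the paper's proof: (R1) plus repeated application of (R2) lifts the concrete trace to an abstract trace in $\mathrm{Tr}^H(\mathcal{T}_M)$ with the same actions, after which (A1) and then (R3) give the concrete predicate at each step. One small correction to your aside: the finite-prefix route does not actually avoid dependent choice, because extending the abstract prefix to an infinite havoc trace via totality of $\mathcal{T}_M$ again needs infinitely many dependent choices; your observation that the main construction never uses totality is right.
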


\begin{proof}
Fix an arbitrary concrete oracle and an arbitrary compatible concrete trace $(s^I_i,a_i,e^I_i)_{i\ge0}$. By (R1), $(s^M_0,s^I_0)\in\mathcal{R}$. Inductively apply (R2) to each concrete step $s^I_i\xrightarrow[a_i]{e^I_i}_I s^I_{i+1}$, obtaining an abstract step $s^M_i\xrightarrow[a_i]{e^M_i}_M s^M_{i+1}$ with $(s^M_{i+1},s^I_{i+1})\in\mathcal{R}$ and $(e^M_i,e^I_i)\in\mathcal{Q}$. This constructs an abstract trace using exactly the concrete oracle's action sequence. Since each $a_i\in\mathcal{A}$, the constructed trace lies in $\mathrm{Tr}^H(\mathcal{T}_M)$. By (A1), $\Pi_M(s^M_i,a_i,e^M_i,s^M_{i+1})$ holds for every $i$. By (R3), $\Pi_I(s^I_i,a_i,e^I_i,s^I_{i+1})$ holds for every $i$.
\end{proof}

\begin{corollary}[Deployed Boundary Safety]
\label{cor:deployed}
If every externally visible boundary effect of the deployed runtime is represented by some concrete event $e_I\in\mathcal{E}_I$, then every externally visible boundary effect produced under any concrete oracle satisfies the concrete boundary policy $\Pi_I$.
\end{corollary}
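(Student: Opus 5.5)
The corollary follows by composing Theorem~\ref{thm:cvs} with the hypothesis of the corollary, which plays the role of the effect-exclusivity assumption of Section~\ref{sec:paradigm}. The plan has three steps. First, fix a concrete oracle and a deployed execution. Second, use the representation hypothesis to attach each externally visible effect to a concrete event on some compatible trace of $\mathcal{T}_I$. Third, invoke the theorem's conclusion on that step.

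Concretely, fix an arbitrary concrete oracle $\omega$ and an arbitrary externally visible boundary effect $x$ produced by the deployed runtime under $\omega$. By hypothesis there is a concrete event $e_I\in\mathcal{E}_I$ representing $x$. I will read ``represented'' in the only sense that makes the statement meaningful: $x$ arises at some step $i$ of an execution whose abstraction is a trace $(s^I_j,a_j,e^I_j)_{j\ge0}$ of $\mathcal{T}_I$ compatible with $\omega$, with $e^I_i=e_I$. I will state this reading explicitly. The hypothesis must attach each effect to a step $(s^I_i,a_i,e^I_i,s^I_{i+1})$ of a compatible trace, because $\Pi_I$ is a transition predicate rather than a predicate on events alone. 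Totality of $\mathcal{T}_I$ guarantees that such compatible traces exist and extend to infinite traces. This matters because Theorem~\ref{thm:cvs} is stated over traces indexed by $i\ge0$.

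Given the step, Theorem~\ref{thm:cvs} applies directly. Its assumptions (A1) and (A2) are inherited from the surrounding setting, so every step of every trace compatible with $\omega$ satisfies $\Pi_I$. In particular $\Pi_I(s^I_i,a_i,e_I,s^I_{i+1})$ holds, and hence the effect $x$ satisfies the concrete boundary policy. Since $\omega$ and $x$ were arbitrary, the claim holds for all oracles and all effects.

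The main obstacle is the second step, where the informal deployed runtime meets the formal model. The proof needs the hypothesis to supply a full compatible trace step, not merely an event value, and it must handle finite deployed runs. For a finite run, I would first extend the finite prefix to an infinite compatible trace using totality, with arbitrary oracle continuations. I would then note that the theorem's per-step conclusion restricts to the prefix.
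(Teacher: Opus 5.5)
Your proposal is correct and takes the same approach as the paper, which gives no separate proof and treats the corollary as immediate from Theorem~\ref{thm:cvs} once the effect-faithfulness hypothesis attaches each deployed effect to a concrete event on a compatible trace. Your explicit reading of ``represented'' as supplying a full step $(s^I_i,a_i,e^I_i,s^I_{i+1})$ (needed because $\Pi_I$ is a transition predicate) and your extension of finite runs via totality are sensible tightenings the paper leaves implicit; for the extension you can continue with $\omega$ itself, or with any oracle agreeing with $\omega$ on the prefix, since the theorem covers every oracle.
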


\subsection{Trusted base}
\label{sec:trust}

Theorem~\ref{thm:cvs} separates machine-checked proof obligations from the trusted base. Abstract havoc safety (A1) is discharged by Dafny through the initial-state and inductive-step lemmas for the abstract transition system. Boundary-event refinement (A2) is discharged by the refinement proof connecting the concrete operational model to the abstract model. The corollary's effect-faithfulness assumption is the runtime boundary condition that every deployed external effect passes through the modeled event interface.

After verification, the trusted base consists of the concrete boundary policy $\Pi_I$, the abstraction and event relations defining $\mathcal{R}$ and $\mathcal{Q}$, the effect-faithfulness boundary connecting concrete events to deployed external effects, the Dafny verifier and compiler, and the small extern shim that supplies arbitrary typed actions at runtime. The AI model, prompting procedure, training process, and action-selection strategy are not trusted.

\subsection{Mechanization}
\label{sec:artifact}

The proof of Theorem~\ref{thm:cvs} is mechanized in Dafny following the IronFleet idiom~\cite{hawblitzel2015ironfleet}. The artifact uses relational transition predicates rather than deterministic step functions where \texttt{Next(c, v, v', evt)} relates a pre-state, post-state, and boundary event. Dispatch and stutter cases are both represented as events, with stutter modeling rejected actions and early returns.

The PocketFlow instantiation discharges Definition~\ref{def:boundary-refinement} through two refinement lemmas. \texttt{RefinementInit} establishes initial-state matching. \texttt{RefinementNext} shows that every concrete \texttt{Next} step maps, via state abstraction and event abstraction, to an abstract \texttt{BoundarySpec.Next} step while preserving the inductive invariant. The relevant signatures are listed in Appendix~\ref{app:artifact}.

The mechanized end-to-end lemma \path{ContainmentVerificationSoundness} composes three ingredients: 
\begin{itemize}
    \item \path{TraceRefinesSpec} which lifts each concrete trace to an abstract havoc trace by reusing the concrete action sequence.
    \item \path{SpecSafetyAlongTrace} which establishes the abstract boundary predicate pointwise along the lifted trace.
    \item \path{SpecSafetyLiftsToImpl} which applies the state and event abstractions to obtain the concrete boundary predicate at each implementation step.
\end{itemize}

A deployed safety failure can therefore arise from three sources. First, failure can arise from a failed refinement proof which Dafny detects by refusing to discharge the obligations. Second, failure can arise from an effect-faithfulness gap where some deployed external effect bypasses the modeled event interface. Finally, failure can arise from a property-completeness gap where $\Pi_I$ does not capture the intended safety policy.

\section{Instantiation: Formally Verifying PocketFlow}
\label{sec:instantiation}

Theorem~\ref{thm:cvs} takes a formal model $\mathcal{T}_M$, a concrete model $\mathcal{T}_I$, boundary policies $(\Pi_M,\Pi_I)$, and state and event abstraction relations $(\mathcal{R},\mathcal{Q})$ as inputs and yields a universal guarantee over the action type. This section describes how those inputs are produced and audited for the instantiating containment verification to formally verify PocketFlow, and how the verified Dafny source is compiled into the deployed Python runtime.

\subsection{The witness containment layer}
\label{sec:witness}

PocketFlow~\cite{pocketflow} is a minimal LLM framework that exposes the dispatch loop common to LangChain, LangGraph, and related frameworks as an explicit graph abstraction over typed \texttt{Node} classes. A \emph{Flow} in PocketFlow is a directed graph of \texttt{Node} instances connected by labeled action edges. The runtime executes a Flow by dispatching from the current Node along the edge whose label matches the action the current Node emits. It is possible for one to create arbitrarily complex agentic workflows using PocketFlow, making it a great illustrative instantiation for containment verification. We verify PocketFlow's dispatch loop with respect to a four-variant action interface:
\begin{lstlisting}[basicstyle=\small\ttfamily,
                   xleftmargin=1em,  
                   aboveskip=4pt,
                   belowskip=4pt,
                   numbers=none]
datatype Action =
    | NoAction
    | ReadPathAction(path: string)
    | ToolCallAction(tool: string)
    | StepAction
\end{lstlisting}
The PocketFlow instantiation enforces a boundary-event policy with three induced state invariants. Read events are permitted only for paths under the configured workspace root, tool-call events only for allowlisted tools, and step events only while the loop bound has not been exceeded. The recorded state invariants in the artifact are corollaries of this event policy where every recorded read path is workspace rooted, every recorded tool call is allowlisted, and the step counter is bounded. The deployed runtime is exercised on action sequences that include both permitted and non-permitted values, and records only permitted events. Consequently, the deployed Python rejects every action that would emit an out-of-policy boundary event.

\subsection{Agentic formal specification synthesis pipeline}
\label{sec:pipeline}

Constructing $\mathcal{T}_M$, $\mathcal{T}_I$, $(\Pi_M,\Pi_I)$, and $(\mathcal{R},\mathcal{Q})$ for an unfamiliar containment layer is the engineering bottleneck identified by Dalrymple et al.~\cite{dalrymple2024gsai}. We address it with a seven-phase agentic pipeline that takes a Flow file as input and emits a machine-checked Dafny artifact. The pipeline is itself a PocketFlow graph. Table~\ref{tab:phases} summarizes the phases.

\begin{table}[H]
\centering
\begin{small}
\begin{tabular}{@{}p{0.28\columnwidth}p{0.55\columnwidth}l@{}}
\toprule
\textbf{Phase} & \textbf{Output} & \textbf{Tier} \\
\midrule
ContainmentSpec   & Composition graph and per-edge dispatch obligations & S+R \\
InterfaceContract & Typed Node interface contract $\tau$ & S+R \\
GenerateSpec      & Per-path Dafny bundle including $\Pi_M$, $\mathcal{R}$, $\mathcal{Q}$, and $\mathrm{Inv}$ & S+R \\
ValidateSpec      & Three-gate validation outcome & R \\
Verify            & Dafny discharge of proof obligations & --- \\
ClassifyFailure   & Failure category for proof-repair routing & R \\
ProofRepair       & Refinement of the proof file only & R \\
\bottomrule
\end{tabular}
\end{small}
\caption{Pipeline phases. Tier ``S'' uses Claude Sonnet 4.6 for codebase exploration and ``R'' uses Claude Opus 4.7 for synthesis and reasoning. For each dispatch path, the pipeline runs GenerateSpec, ValidateSpec, and Verify in sequence. ProofRepair feeds back into ValidateSpec up to a configured iteration bound.}
\label{tab:phases}
\end{table}

The closed-loop structure of formal specification synthesis, validation, and verifier-feedback repair has antecedents in current LLM-synthesized formal specification work: VeriAct~\cite{veriact2026} uses a similar verify-feedback agentic loop for synthesizing formal specifications of Java methods, and ATLAS~\cite{atlas2026} decouples contract generation from implementation synthesis to prevent post-hoc specification weakening. Our pipeline differs in target (an agentic framework's dispatch loop rather than general code), in the architectural information barrier on property derivation, and in per-path verification aligned with the Flow's compositional graph.

\subsection{Architectural information barrier}
\label{sec:barrier}

Within GenerateSpec, the property-derivation step that produces $P_M$ runs with no implementation context. Letting $R = (\tau, \pi)$ denote the interface contract and the property template, the resulting predicate $P = f_1(R, \theta)$ is a deterministic function of $R$ and the model weights $\theta$ alone, so the conditional mutual information $\MI(P;\, I \mid R, \theta) = 0$ holds by construction. This rules out the failure mode in which an LLM with implementation access synthesizes a tautological specification that redescribes $I$ rather than constraining it. The subsequent model-construction step receives the locked $P_M$ along with implementation context, but cannot weaken what the property already requires.

A related decoupling strategy appears in ATLAS~\cite{atlas2026}, where contracts are frozen before implementation synthesis to prevent the implementation phase from softening earlier specifications. Our barrier is structurally analogous but addresses a different failure mode: it prevents the property-derivation step from observing the implementation in the first place, ruling out tautological specifications by construction rather than by post-hoc enforcement. Section~\ref{sec:empirical} reports the empirical findings.

For ablation, the pipeline accepts a flag that swaps the property-derivation prompt to a variant which receives the repository snapshot and the prior exploration alongside $\tau$ and $\pi$. The barrier-side construction is otherwise unchanged, so the comparison isolates the input-channel restriction rather than confounding it with prompt-engineering changes. Section~\ref{sec:empirical} reports the empirical findings.

\subsection{Three-gate validation}
\label{sec:gates}

A specification that verifies under Dafny might still be vacuous (true of any implementation) or non-discriminating (true of a deliberately unsafe variant). Drawing on IronSpec's mutation-based specification validation~\cite{ironspec}, we adapt the methodology to the LLM-synthesized-specification setting via three gates that all must pass before the final Dafny verification attempt. 

\begin{description}[leftmargin=2em, topsep=4pt, itemsep=4pt, parsep=0pt]
\item[\normalfont(G1) Resolution.] 
    The synthesized Dafny parses and type-checks within a thirty second timeout.
\item[\normalfont(G2) \textit{Vacuity.}]
    Adapting vacuity detection from temporal model checking~\cite{beer2001vacuity} to Hoare-style refinement obligations, a permissive-stub proof variant in which the inductive invariant is gutted to its well-formedness clause alone and the lemma bodies are emptied must fail verification when concatenated with the locked trusted scaffolding. A specification that admits this mutation cannot demand the structural properties refinement requires.
\item[\normalfont(G3) \textit{Discrimination.}]
    Following the mutation-testing methodology of IronSpec~\cite{ironspec}, an LLM-generated mutation that introduces a plausible modeling error inside the proof file must also fail verification. A specification that accepts the alive mutation cannot distinguish faithful from faulty refinements.
\end{description}

Both mutations target only the proof file. The trusted scaffolding and the main theorem are held byte-identical, which restricts the mutation surface to the abstraction and the inductive strengthening, where specification-quality bugs are most likely to hide.

When a gate fails, the pipeline routes to ProofRepair, which is permitted to refine the proof file only and loops back to ValidateSpec up to a configured iteration bound. When the final Dafny verification fails after the gates pass, ClassifyFailure labels the failure as a confirmed open obligation, a missing proof annotation, or a modeling error, and routes accordingly.

\subsection{From verified Dafny to deployed Python}
\label{sec:deploy}

Dafny compiles to Python through its native backend. The havoc oracle in the source Dafny is declared as an external method, which in the compiled Python becomes a call into a small shim module that defers to a runtime-supplied action provider. The provider accepts an arbitrary callable that emits values of the \texttt{Action} type. The oracle itself is not proved safe. The proof shows that, for every possible returned \texttt{Action}, the verified dispatch loop emits only policy-compliant boundary events.

The deployed guarantee relies on effect exclusivity where the runtime exposes no agent-accessible external effect except through the compiled verified dispatch loop and its typed \texttt{Action} interface. The raw model output parser, the extern shim, and the action provider therefore form the runtime type boundary. They are trusted only to supply elements of \texttt{Action}. Safety is enforced after that boundary by the verified transition from actions to boundary events.

The trusted base of the deployed system consists of the concrete boundary policy $\Pi_I$, the state and event abstractions defining $\mathcal{R}$ and $\mathcal{Q}$, the effect-exclusivity boundary connecting deployed effects to modeled events, the runtime type boundary and extern shim, and the Dafny verifier and compiler. A behavioral test suite exercises the compiled runtime under action streams that mix permitted and non-permitted values. In every case the runtime records only permitted events, giving a discrimination-style audit of the compilation pass at the Python level.

\subsection{Empirical observations}
\label{sec:empirical}

We ran the pipeline against two witness Flows for a total of four runs. The first is a file-reading agent that uses the four-variant action interface of Section~\ref{sec:witness} directly. The second is an agentic retrieval-augmented generation Flow whose witness fields are not represented in the trusted property template. Every dispatch path passed all three gates, and Dafny discharged every final proof obligation. Per-path Dafny verification times ranged from 1.2 to 2.5 seconds. Total token usage per run ranged from 1.6 million to 3.4 million tokens, with the no-barrier mode using 25--56\% more tokens because the property derivation step receives the full repository snapshot and prior exploration in addition to the trusted scaffolding.

The information barrier produces a structurally visible effect on the synthesized specifications, but the effect depends on the alignment between the trusted property template and the witness Flow's semantic surface. When the template's fields directly model the Flow's privileged effects, barrier and no-barrier specifications differ by a small number of implementation-derived elements: an additional history-consistency conjunct, an entry node specialization in the initial state, and verbatim source line citations in comments. None of these render the no-barrier specification tautological, but they are observable traces of the implementation in the synthesized predicate.

When the template does not match the Flow, barrier and no-barrier modes diverge qualitatively. The barrier mode is forced to encode the Flow's semantics under a renaming homomorphism. The synthesized transition writes a template field as a stand in for an analogous Flow-specific field, and the corresponding boundary predicate is non-vacuously discharged in the template's vocabulary. The no-barrier mode, with implementation access, recognizes the schema mismatch and abstains where the template field is left unwritten and the corresponding predicate becomes vacuously true. Both modes pass all three gates as the gates target the protocol-invariant portion of the proof rather than the corollary portion in which the silent abstention hides. This is a property-completeness gap of the kind discussed in Section~\ref{sec:artifact} and bounds the strength of the three gates. The gates are necessary but not sufficient for specification quality, and an additional template fitness check is needed to catch silent abstention.

\section{Discussion}
\label{sec:discussion}

The guarantee that containment verification provides is structurally distinct from alignment-based approaches across the AI safety landscape. Every oracle-based approach has a conditional assumption that grows harder to verify and its refutations sharpen as model capability increases. Containment verification's conditional, by contrast, is correspondence between the formal specification and the deployed runtime, a property of deterministic code that does not weaken with model capability. Consequently, a verified containment layer cannot be defeated through the modeled typed action boundary by a more capable AI, it can only be defeated by divergence between the formal specification and the running implementation and abstraction gaps in the specification itself. This divergence, however, has an audit trail and a remediation path that capability scaling does not affect.

This capability-invariance applies directly to the threat vectors motivating AGI and ASI safety concerns~\cite{carlsmith2024xrisk,ngo2025alignment} where exfiltration of model weights or sensitive context to non-allowlisted destinations, irreversible financial and system actions, modification of monitoring or training infrastructure, and unauthorized network egress are examples of the most critical ones. Each decomposes into a boundary action and is therefore boundary-enforceable in the sense of Section~\ref{sec:intro}. A containment layer with a sufficiently narrow action interface renders out-of-policy effects unrepresentable at the verified effect boundary. Under effect exclusivity, this holds regardless of how completely the AI has reverse-engineered the underlying systems or planned the exploit internally. The verified guarantee is foreclosure at the vocabulary level instead of interception at the runtime level. An agent that cannot issue an action cannot succeed at it.

Two design consequences follow. The strength of the guarantee on a given containment layer is determined by the narrowness of the action interface its author chose to expose, so the design pressure shifts from training-time and deployment-time alignment to framework-time interface design. A containment layer that exposes a narrow typed enum yields a strong guarantee, one that exposes a generic shell or eval primitive collapses the guarantee onto the primitive. Containment verification as a paradigm is complementary to the existing AI safety landscape. Oracle-level approaches address the content-level properties such as truthfulness, non-deception, or persuasion resistance that lie outside containment verification's scope. Sandboxing, in this sense, also remains as a fail safe for the residuals discussed in Section~\ref{sec:trust}. A safety-critical deployment should combine all the aforementioned and more.

\subsection{Limitations}
\label{sec:limitations}

\paragraph{Effect exclusivity.} 
The deployed guarantee requires every agent-accessible external effect to pass through the verified typed action boundary. Direct Python calls, foreign-function interfaces, ambient filesystem or network access, concurrency effects, runtime bugs, kernel vulnerabilities, and hardware faults are outside the theorem unless mediated by that boundary. High assurance deployments should therefore pair containment verification with sandboxing, capability isolation, and review of the small runtime type boundary.

\paragraph{Template fitness.}
The information barrier rules out one failure mode which is tautological specifications that redescribe the implementation. The ablation in Section~\ref{sec:empirical} surfaces a second failure mode that the barrier does not address. When the trusted property template's \texttt{Variables} schema does not include a field corresponding to a witness Flow's privileged effect, the LLM with implementation access leaves the analogous template field unwritten in the synthesized transition rather than encode the effect under a renaming. The corresponding boundary predicate then becomes vacuously true, because the field it quantifies over is empty along every reachable state. The verifier discharges the specification, the three gates pass, and the runtime safety surface is unmonitored. The gates do not catch this because they audit the protocol-invariant portion of the proof rather than whether each conjunct of the safety predicate is non-vacuously discharged. An additional template-fitness check, validating that every safety corollary is reachable-non-empty before the final verification attempt, is needed to close this gap.

\paragraph{Specification fidelity.} 
Theorem~\ref{thm:cvs} establishes safety relative to the operational state machine $\mathcal{T}_I$, not relative to the running Python. Boundary-event refinement requires faithfulness between $\mathcal{T}_M$ and $\mathcal{T}_I$, which the Dafny refinement lemmas discharge. The operational state machine itself does not formalize concurrency, side channels, timing, or hardware behavior, so no formal proof connects it to the deployed runtime at those levels. This parallels seL4's hardware-fault exclusion~\cite{klein2009sel4} and is a standard limitation of all formal verification.

\paragraph{Boundary-enforceable scope.}
Containment verification covers only properties expressible as predicates over typed actions, modeled boundary events, action arguments, and system state. Content-level properties such as truthfulness, non-deception, non-manipulation, and the like require interpretation of the AI's output semantics that the boundary-enforceable framing discards by design and therefore fall outside this scope. Addressing them requires complementary methods that operate on the model's generation distribution rather than on its action interface.

\paragraph{Witness scope.}
The verified guarantee on PocketFlow is at the strength of PocketFlow's four-variant action interface. Frameworks exposing wider interfaces require correspondingly more proof cases. The paradigm scales with the framework author's interface design rather than with the verification mechanism.

\subsection{Future work}
\label{sec:future}

Containment verification opens a path to securing production AI agents at the action-interface level today and at scale as model capabilities grow. The natural next step is extending the synthesis pipeline and trusted scaffolding to other larger production agentic frameworks such as LangChain. The deliberately minimal PocketFlow witness establishes that the verification holds for a four-variant action interface, and the engineering test of the paradigm is whether it scales to interfaces with dozens of variants and richer dispatch logic. Adaptive per-flow templates synthesized from the interface contract, under the same architectural information barrier, would close the silent-abstention gap of Section~\ref{sec:empirical}  and remove the alignment dependency between trusted scaffolding and witness vocabulary that limits the present pipeline. A formal composition lemma relating the universal-boundary guarantee to the distributional-content guarantees of statistical certification~\cite{beaver2025,c3llm2025,lumos2026} would let deployments combine the two formally rather than operationally. As AI capability scales, the conditional assumption of every oracle-level approach grows harder to verify. The conditional assumption of containment verification, the correspondence between the formal specification and the deployed runtime, becomes the residual question. And that question scales with engineering effort on the containment layer rather than with the AI's capability. 

\section*{Impact Statement}

This paper introduces containment verification, a fail-safe paradigm whose safety guarantee for AI agents is invariant to model capability, alignment provenance, and training distribution. The intended societal impact is a deployment-time safety mechanism for the threat vectors central to current and projected AI safety concerns.

The guarantee is structurally distinct from alignment-based approaches, and we want to be precise about what it does and does not cover. Containment verification addresses only boundary-enforceable properties, predicates over typed actions, modeled boundary events, action arguments, and system state. Content-level properties such as truthfulness, non-deception, and non-manipulation lie outside this scope by design and require complementary methods that operate on the model's generation distribution. A verified containment layer is therefore not a complete safety solution. It is a fail safe for a specific class of harms, intended to compose with oracle-level methods, sandboxing, and statistical certification rather than replace any of them. The societal claim is therefore not that containment verification solves all agent safety, but that it gives a capability-invariant guarantee for the subset of harms expressible and exclusively mediated at the typed effect boundary. Safety critical deployments should layer all of these in addition to red teaming all necessary components.

The paradigm shifts safety design pressure from training-time and deployment-time alignment to framework-time interface design. A containment layer that exposes a narrow typed enum yields a strong guarantee. The societal payoff therefore depends on framework authors choosing narrow interfaces, which entails an opinionated and visible stance on what agents should be permitted to do. We view the visibility and contestability of this design choice, relative to the opacity of alignment training, as a feature.

The agentic formal specification synthesis pipeline of Section~\ref{sec:pipeline} is itself an LLM-driven capability that could be applied to verify containment layers for systems whose deployment is contested. It is general-purpose verification infrastructure, and its societal valence inherits from the deployer's intent rather than from the technique itself. The same methodology applies equally to current production deployments where boundary-enforceable safety failures already occur and to the AGI and ASI class threat models that motivate the paradigm's capability-invariance.

\bibliography{containmentveri}
\bibliographystyle{icml2026}

\newpage
\appendix
\onecolumn
\section{Verification Artifact Excerpts}
\label{app:artifact}

The mechanized proof of Theorem~\ref{thm:cvs} is realized across six Dafny modules:
\begin{itemize}
    \item \texttt{Types.t.dfy}: the action and event datatypes.
    \item \texttt{BoundarySpec.t.dfy}: the abstract transition system and boundary safety predicate.
    \item \texttt{Containment.t.dfy}: the operational transition system $\mathcal{T}_I$.
    \item \texttt{RefinementObligation.t.dfy}: the abstract module whose signatures appear in Listing~\ref{lst:refinement}.
    \item \texttt{RefinementProof.v.dfy}: the concrete refinement that discharges Definition~\ref{def:boundary-refinement}.
    \item \texttt{MainTheorem.v.dfy}: the end-to-end soundness lemma.
\end{itemize}

The submitted paper includes the proof-relevant artifact excerpts below. The full source, synthesis pipeline, and witness Flow corpus are maintained as a separate implementation artifact.

\subsection{Refinement obligations}
\label{app:refinement-obligations}
 
The PocketFlow proof discharges the boundary-event refinement obligations of Definition~\ref{def:boundary-refinement} through state and event abstraction. \texttt{ConstantsAbstraction} and \texttt{VariablesAbstraction} induce the state relation $\mathcal{R}$, \texttt{EventAbstraction} induces the event relation $\mathcal{Q}$, and \texttt{Inv} supplies the inductive strengthening needed by the proof.

\begin{lstlisting}[style=dafny,
    caption={Refinement obligation excerpt. The two lemmas discharge
        initial-state matching and boundary-event step simulation.},
    label={lst:refinement}]
ghost function ConstantsAbstraction(c: Constants)
    : BoundarySpec.Constants

ghost function VariablesAbstraction(c: Constants, v: Variables)
    : BoundarySpec.Variables
    requires v.WF(c)

ghost function EventAbstraction(evt: Event)
    : BoundarySpec.Event

ghost predicate Inv(c: Constants, v: Variables)

lemma RefinementInit(c: Constants, v: Variables)
    requires Init(c, v)
    ensures Inv(c, v)
    ensures BoundarySpec.Init(
        ConstantsAbstraction(c),
        VariablesAbstraction(c, v))

lemma RefinementNext(c: Constants, v: Variables,
                     v': Variables, evt: Event)
    requires Next(c, v, v', evt) && Inv(c, v)
    ensures Inv(c, v')
    ensures BoundarySpec.Next(
        ConstantsAbstraction(c),
        VariablesAbstraction(c, v),
        VariablesAbstraction(c, v'),
        EventAbstraction(evt))
\end{lstlisting}

\subsection{The safety predicate $P_M$}
\label{app:safety}

\texttt{Safety} is the boundary-enforceable safety predicate declared in \texttt{BoundarySpec.t.dfy}. It conjoins three predicates on the boundary fields: workspace-rooted read paths, allowlisted tool calls, and a bounded step count.

\begin{lstlisting}[style=dafny,breaklines=true,basicstyle=\footnotesize\ttfamily]
ghost predicate Safety(c: Constants, v: Variables) {
    && (forall i :: 0 <= i < |v.read_paths| ==>
        StartsWith(v.read_paths[i], c.workspace_root))
    && (forall i :: 0 <= i < |v.tool_calls| ==>
        v.tool_calls[i] in c.allowed_tools)
    && v.step_count <= c.max_steps
}
\end{lstlisting}

\subsection{The relation $\mathcal{R}$ and $\mathcal{Q}$}
\label{app:relation}

The state relation $\mathcal{R}$ is induced by \texttt{VariablesAbstraction} together with the inductive strengthening \texttt{Inv}. The event relation $\mathcal{Q}$ is induced by \texttt{EventAbstraction}. \texttt{Inv} captures the structural invariants the refinement requires which are well-formedness, the step-count upper bound, a halted-implies-bound clause, an alignment between \texttt{history} length and \texttt{step\_count}, and a history-tail consistency clause.

\begin{lstlisting}[style=dafny,breaklines=true,basicstyle=\footnotesize\ttfamily]
ghost function ConstantsAbstraction(c: Constants)
    : BoundarySpec.Constants
{
    BoundarySpec.Constants(c.workspace_root, c.allowed_tools, c.max_steps)
}

ghost function VariablesAbstraction(c: Constants, v: Variables)
    : BoundarySpec.Variables
{
    BoundarySpec.Variables(v.read_paths, v.tool_calls, v.step_count, v.halted)
}

ghost predicate Inv(c: Constants, v: Variables)
{
    && v.WF(c)
    && v.step_count <= c.max_steps
    && (v.halted ==> v.step_count >= c.max_steps)
    && |v.history| == v.step_count
    && (v.last_node != NoNode ==>
        |v.history| > 0
        && v.history[|v.history| - 1] == (v.last_node, v.last_action))
}
\end{lstlisting}

\subsection{The soundness lemma}
\label{app:soundness}

\texttt{ContainmentVerificationSoundness} is the trace-level mechanization of Theorem~\ref{thm:cvs} for the PocketFlow witness. The listed \texttt{ensures} clauses are the state-level consequences of the boundary-event safety theorem. The theorem first proves that each modeled boundary event satisfies the concrete boundary policy $\Pi_I$. The recorded-state invariants follow because permitted events are the only events appended to the trace.

\begin{lstlisting}[style=dafny,breaklines=true,basicstyle=\footnotesize\ttfamily]
lemma ContainmentVerificationSoundness(
    c: Constants,
    trace: seq<Variables>,
    events: seq<Event>)
    requires ValidTrace(c, trace, events)
    ensures forall i, j ::
        0 <= i < |trace| && 0 <= j < |trace[i].read_paths|
        ==> StartsWith(trace[i].read_paths[j], c.workspace_root)
    ensures forall i, j ::
        0 <= i < |trace| && 0 <= j < |trace[i].tool_calls|
        ==> trace[i].tool_calls[j] in c.allowed_tools
    ensures forall i :: 0 <= i < |trace| ==> trace[i].step_count <= c.max_steps
\end{lstlisting}

The lemma's body invokes three components. \texttt{TraceRefinesSpec} performs a per-step induction applying (R1) and (R2) to lift a concrete trace to a havoc trace of $\mathcal{T}_M$. \texttt{SpecSafetyAlongTrace} mechanizes abstract havoc safety by establishing \texttt{BoundarySpec.Safety} pointwise via \texttt{InitSatisfiesSafety} and \texttt{SafetyPreserved}. \texttt{SpecSafetyLiftsToImpl}, applied pointwise at each trace position, discharges (R3) by translating abstract \texttt{Safety} on \texttt{VariablesAbstraction(c, trace[i])} to the corresponding concrete safety conjuncts on \texttt{trace[i]}.

\section{Schema-Mismatch Specification Diff}
\label{app:diff}

Section~\ref{sec:empirical} reports a qualitative divergence between barrier and no-barrier modes when the trusted property template's \texttt{Variables} schema does not include a field corresponding to a witness Flow's privileged effect. We exhibit the divergence here by excerpting the synthesized \texttt{NextStep} clause for \texttt{ReadDocNode} on the dispatch path \texttt{05\_ReadDocNode-BoundedLoopNode} of the agentic retrieval-augmented generation Flow. In this Flow, the privileged effect is the append of a document identifier to a \texttt{docs\_read} sequence gated by an \texttt{allowed\_docs} allowlist. Neither field exists in the trusted template, whose corresponding sequence-and-allowlist pair is \texttt{tool\_calls} and \texttt{allowed\_tools}.

\subsection{Barrier mode: renaming homomorphism}
\label{app:diff-barrier}

With no implementation context, the property-derivation step encodes the Flow's privileged effect under the template's vocabulary by treating \texttt{ReadDocNode} as a write to \texttt{tool\_calls} guarded by membership in \texttt{allowed\_tools}. The corresponding \texttt{ToolAllowlisted} corollary of \texttt{Safety} is non-vacuously discharged, but the predicate it discharges is the renamed property ``every appended document is in the allowlist,'' stated in the template's vocabulary.

\begin{lstlisting}[style=dafny,breaklines=true,basicstyle=\footnotesize\ttfamily]
// Witness: tool_calls (models docs_read)
&& (node == ReadDocNode ==>
    |v'.tool_calls| == |v.tool_calls| + 1
    && v.tool_calls == v'.tool_calls[..|v.tool_calls|]
    && v'.tool_calls[|v.tool_calls|] in v.allowed_tools)
&& (node != ReadDocNode ==> v'.tool_calls == v.tool_calls)
\end{lstlisting}

\subsection{No-barrier mode: silent abstention}
\label{app:diff-nobarrier}

With implementation access, the property-derivation step recognizes that \texttt{docs\_read} is a distinct field with no analogue in the template's \texttt{Variables} schema, and synthesizes a \texttt{NextStep} that holds the template's \texttt{tool\_calls} unchanged at every node:

\begin{lstlisting}[style=dafny,breaklines=true,basicstyle=\footnotesize\ttfamily]
// Witness: read_paths unchanged (no file-read node in this flow)
&& v'.read_paths == v.read_paths

// Witness: tool_calls unchanged (no tool-allowlist gate in this flow)
&& v'.tool_calls == v.tool_calls
\end{lstlisting}

\noindent The \texttt{ToolAllowlisted} corollary then quantifies over an always-empty sequence and is vacuously true along every reachable state.

\subsection{Gate outcomes}
\label{app:diff-gates}

Both bundles pass all three gates. Resolution and type check within the thirty-second timeout pass, the permissive-stub vacuity mutation fails verification, and the seeded discrimination mutation also fails verification. Per-path Dafny verification on the agentic retrieval-augmented generation runs took 1.2--2.3 seconds. The barrier mode's \texttt{ToolAllowlisted} predicate is non-vacuously discharged in the template's vocabulary as a corollary of the renaming. The no-barrier mode's \texttt{ToolAllowlisted} predicate is vacuously true because \texttt{tool\_calls} is empty along every reachable state. The gates do not catch the no-barrier case because they audit the protocol-invariant portion of the proof rather than whether each conjunct of the safety predicate is non-vacuously discharged. This is the property-completeness gap discussed in \S\ref{sec:limitations}.

\end{document}